\pdfoutput=1

\documentclass[twocolumn,oneside,fontsize=9pt,paper=a4,pagesize,DIV=calc]{scrartcl}
\RequirePackage{etoolbox}

\usepackage{hyperref}

\usepackage{times}
\usepackage[dvipsnames]{xcolor}
\usepackage{float}
\usepackage[caption=false]{subfig}
\usepackage{graphicx}
\usepackage{bibentry}
\usepackage{algorithm}
\usepackage[noend]{algorithmic}
\usepackage{booktabs}
\usepackage{multirow}
\usepackage{amsmath,amsfonts,amsthm}
\usepackage[capitalize]{cleveref}
\usepackage[inline,shortlabels]{enumitem}
\usepackage{calc}
\usepackage{authblk}

\AfterEndPreamble{\let\ref\cref\undef{\cref}\let\Ref\Cref\undef{\Cref}}
\Crefname{equation}{}{}\crefname{equation}{}{}
\Crefname{myprob}{Problem}{Problems}\crefname{myprob}{Problem}{Problems}
\theoremstyle{plain}

\newtheorem{mylemma}{Lemma}
\newtheorem{mythm}{Theorem}
\theoremstyle{definition}
\newtheorem{myprob}{Problem}

\hyphenation{}

\newcommand{\tabformathead}[1]{\textbf{\boldmath{#1}}}

\newenvironment{mtables}[1]{\begin{table}[t]\caption{#1}\begin{center}\begin{tiny}}{\end{tiny}\end{center}\end{table}}
\newcommand{\eigvec}[1]{\psi_{#1}}
\newcommand{\eigveca}[1]{a_{#1}}
\newcommand{\eigvecv}[1]{v_{#1}}
\newcommand{\eigval}[1]{\lambda_{#1}}
\newcommand{\showline}[1]{[{\color{#1}\rule[2pt]{5pt}{2pt}}]}
\newcommand{\showsymb}[2]{[{\color{#1}#2}]}
\newcommand{\showaster}[1]{\showsymb{#1}{*}}
\colorlet{coltriv}{Goldenrod!70!Black}%
\colorlet{colntriv}{NavyBlue}%
\colorlet{colnboun}{BrickRed}%
\colorlet{colfill}{SeaGreen!20}%
\DeclareMathOperator{\spano}{span}
\newcommand{\spans}[1]{\spano\prn{#1}}
\DeclareMathOperator{\sign}{sign}
\let\originalleft\left
\let\originalright\right
\renewcommand{\left}{\mathopen{}\mathclose\bgroup\originalleft}
\renewcommand{\right}{\aftergroup\egroup\originalright}
\newcommand{\prn}[1]{\left ( #1 \right )}
\newcommand{\brc}[1]{\left \{ #1 \right \}}
\newcommand{\brq}[1]{\left [ #1 \right ]}
\newcommand{\set}[1]{\brc{#1}}
\newcommand{\R}{\mathbb{R}}
\newcommand{\eeq}[1]{\; #1}
\newcommand{\inp}[2]{\langle #1 | #2 \rangle}
\newcommand{\inpt}[3]{\langle #1 | #2 | #3 \rangle}
\newcommand{\inpe}[2]{\langle #1 | #2 \rangle_{\Esp}}
\newcommand{\bra}[1]{\langle #1 |}
\newcommand{\ket}[1]{| #1 \rangle}
\newcommand{\sepeq}{\eeq{,} \quad}
\newcommand{\sepfor}{\quad\text{for }}
\newcommand{\proj}{\mathcal{P}}
\newcommand{\opt}[1]{#1^\star}
\newcommand{\tr}{\top}
\newcommand{\Xset}{\mathcal{X}}
\newcommand{\Hsp}{\mathcal{H}}
\newcommand{\Esp}{\mathcal{E}}
\newcommand{\fmap}{\varphi}
\newcommand{\fmapx}[1]{\fmap\prn{#1}}
\newcommand{\covard}{\Phi \circ \Phi^*}
\newcommand{\kernelp}{\Phi^* \circ \Phi}
\newcommand{\eb}{\mathsf{e}}
\newcommand{\agropt}[1]{#1}
\newcommand{\sepopt}{~}

\newcommand{\minp}[2]{\min_{#1}\sepopt{\agropt{#2}}}

\newcommand{\minpcl}[3]{\left \{\begin{array}{l}\displaystyle \min_{#1}\sepopt{\agropt{#2}}\\\displaystyle\text{s.t. }#3\end{array} \right .}
\newcommand{\minpcla}[3]{\left \{\begin{array}{l}\displaystyle \min_{#1}\sepopt{\agropt{#2}}\\\displaystyle\text{s.t. }\left \{ \begin{array}{l}#3\end{array} \right . \end{array} \right .}
\newcommand{\lagr}{\mathcal{L}}

\newcommand{\svm}{Subs-LSSVM}
\newcommand{\ssvm}{Semi-LSSVM}
\newcommand{\ssvmsvm}{Semi/Subs-LSSVM}
\newcommand{\kpca}{KPCA}
\newcommand{\skpca}{Semi-KPCA}
\newcommand{\skpcak}[1]{\skpca$_{#1}$}

\newif\iftikz

\newcommand{\tikzpath}{./Figures/Tikz}
\newcommand{\pdfpath}{./}

\iftikz\tikzexternalize[prefix=\pdfpath/]\fi

\newcounter{Fig}

\newcommand{\includetikz}[1]{\iftikz\begin{minipage}[b]{\figwidth}\vspace{0pt}\scriptsize\centering\tikzincexp\tikzsetnextfilename{\tikznamee{#1}}\input{\tikzpath/#1.tikz}\end{minipage}\else\includetikzpdf{#1}\fi}
\newcommand{\includetikzf}[1]{\iftikz\includetikzextkz{#1}\else\includetikzexpdf{#1}\fi}

\newcommand{\tikznameexp}[1]{#1\tikzsuf}
\newcommand{\tikznameseq}[1]{\tikzbase\theFig\tikzsuf}

\newcommand{\includetikzpdf}[1]{\centering\tikzincexp\includegraphics{\pdfpath/\tikznamee{#1}.pdf}}
\newcommand{\includetikzextkz}[1]{\begin{minipage}[b]{\figwidth}\vspace{0pt}\scriptsize\centering\tikzincseq\tikzsetnextfilename{\tikznames{#1}}{#1}\end{minipage}}
\newcommand{\includetikzexpdf}[1]{\centering\tikzincseq\includegraphics{\pdfpath/\tikznames{#1}.pdf}}
\newcommand{\figwidth}{\columnwidth}
\newcommand{\tikzwidth}[1]{\renewcommand{\figwidth}{#1}}

\newcommand{\tikzincseq}{\stepcounter{Fig}}
\newcommand{\tikzincexp}{}

\newcommand{\tikzbase}{}
\newcommand{\tikzsuf}{}
\newcommand{\tikznamee}[1]{\tikznameexp{#1}}
\newcommand{\tikznames}[1]{\tikznameseq{#1}}

\newcommand{\tikzmodearxiv}{\renewcommand{\tikzbase}{Fig}\renewcommand{\tikzsuf}{_arXiv}\renewcommand{\tikznamee}[1]{\tikznameexp{##1}}\renewcommand{\tikznames}[1]{\tikznameseq{##1}}\setcounter{Fig}{0}}

\newcommand{\mycaption}{}
\newenvironment{mfiguretikz}[1]{\renewcommand{\mycaption}{#1}\begin{figure*}[t]\begin{center}\begin{minipage}{0.9\textwidth}}{\end{minipage}\caption{\mycaption}\end{center}\end{figure*}}
\newenvironment{mfiguretikzs}[1]{\renewcommand{\mycaption}{#1}\begin{figure}[t]\begin{center}\begin{minipage}{0.4\textwidth}\begin{center}}{\end{center}\end{minipage}\caption[Caption]{\mycaption}\end{center}\end{figure}}

\newenvironment{malgorithm}[1]{\begin{algorithm}[t]\caption{#1}\begin{center}\begin{scriptsize}}{\end{scriptsize}\end{center}\end{algorithm}}

\newcommand{\algstop}{;}
\newcommand{\algreq}{$\cdot$~}
\algsetup{linenosize=\tiny}

\setlength{\fboxsep}{1pt}

\newcommand{\formata}[1]{{\colorbox{ForestGreen}{\color{Black}\makebox[\widthof{$99.9\pm99.9$}][c]{\raisebox{0pt}[\height][0pt]{#1}}}}}
\newcommand{\formatb}[1]{{\colorbox{ForestGreen!50}{\color{Black}\makebox[\widthof{$99.9\pm99.9$}][c]{\raisebox{0pt}[\height][0pt]{#1}}}}}
\newcommand{\formatc}[1]{{\colorbox{ForestGreen!20}{\color{Black}\makebox[\widthof{$99.9\pm99.9$}][c]{\raisebox{0pt}[\height][0pt]{#1}}}}}

\makeatletter
\newcommand{\showemail}[1]{Email: \href{mailto:#1@esat.kuleuven.be}{#1@esat.kuleuven.be}.}
\makeatother


\title{Convex Formulation for Kernel PCA and its Use in Semi-Supervised Learning}

\author{Carlos M. Ala\'iz \thanks{\showemail{cmalaiz}}}
\author{Micha\"el Fanuel \thanks{\showemail{michael.fanuel}}}
\author{Johan A. K. Suykens \thanks{\showemail{johan.suykens}}}
\affil{KU Leuven, ESAT, STADIUS Center. B-3001 Leuven, Belgium.}

\date{\today}

\tikzmodearxiv

\begin{document} 

\maketitle

\begin{abstract}
 In this paper, Kernel PCA is reinterpreted as the solution to a convex optimization problem. Actually, there is a constrained convex problem for each principal component, so that the constraints guarantee that the principal component is indeed a solution, and not a mere saddle point.
 Although these insights do not imply any algorithmic improvement, they can be used to further understand the method, formulate possible extensions and properly address them.
 As an example, a new convex optimization problem for semi-supervised classification is proposed, which seems particularly well-suited whenever the number of known labels is small. Our formulation resembles a Least Squares SVM problem with a regularization parameter multiplied by a negative sign, combined with a variational principle for Kernel PCA. Our primal optimization principle for semi-supervised learning is solved in terms of the Lagrange multipliers.
 Numerical experiments in several classification tasks illustrate the performance of the proposed model in problems with only a few labeled data.
\end{abstract}

\section{Introduction}
\label{SecIntro}

Kernel PCA~(\kpca{};~\cite{Schoelkopf_KPCA}) is a well-known unsupervised method which has been used extensively, for instance in novelty detection~\cite{Novelty_Dectection} or image denoising~\cite{Mika:1999}. As for PCA, there are many different interpretations of this method. In this paper, motivated by the importance of convex optimization in unsupervised and supervised methods, we firstly study the following question: ``to which convex optimization problem \kpca{} is the solution?''. Hence, we find that there exists a constrained convex optimization problem for each principal component of \kpca{}. A major asset of our formulation with respect to previous studies in the literature~\cite{IEEE_Suykens} is that the optimization problem is proved to be convex.
Moreover, this result helps us to unravel the connection between unsupervised and supervised methods. We formulate a new convex problem for semi-supervised classification, which is a hybrid formulation between \kpca{} and Least Squares SVM (LS-SVM;~\cite{BookSuykens}).

We can summarize the theoretical and empirical contributions of this paper as follows:
\begin{enumerate*}[(i)]
 \item we define a convex optimization problem for \kpca{} in a general setting including the case of  an infinite dimensional feature map;
 \item a variant of \kpca{} for semi-supervised classification is defined as a constrained convex optimization problem, which can be interpreted as a regression problem with a concave error function, the regularization constant being constrained so that the problem remains convex; and
 \item the latter method is studied empirically in several classification tasks, showing that the performance is particularly good when only a small number of labels is known.
\end{enumerate*}

The paper is structured as follows. In \ref{SecVarPrinc} we review \kpca{} and LS-SVM. In \ref{SecConvexOpt} we introduce the convex formulation of \kpca{}, which is extended to semi-supervised learning in \ref{SecSemiSup}. \Ref{SecExamples} includes the numerical experiments, and we end with some conclusions in \ref{SecConcl}.

\section{Variational Principles}
\label{SecVarPrinc}

Let us suppose that the data points $\set{x_i}_{i=1}^{N}$ are in a set $\Xset$, which can be chosen to be $\R^d$, but can also be a set of web pages or text documents, for instance. The data are embedded in the feature space $\Hsp$ with the help of a feature map. In this introductory section, we will choose $\Hsp = \R^h$.

Given a set of pairs of data points and labels $\set{\prn{x_i,y_i}}_{i=1}^{N}$ where $y_i = \pm 1$ is the class label of the point $x_i \in \Xset$, the Least Squares SVM (LS-SVM) problem, which is convex with a unique solution, is formulated as follows.
\begin{myprob}[LS-SVM for Regression] Let $w \in \R^h$ and $e_i \in \R$, while $\gamma > 0$ is a regularization constant and $\fmap : \Xset \to \R^h$ is the feature map. The following problem is defined:
\label{ProbSSVM}
 \begin{equation*}
 \label{EqSSVM}
  \minpcl{w, e_i, b}{\frac{1}{2} w^{T} w + \frac{\gamma}{2} \sum_{i=1}^{N} e_i^2}{y_i = w^{T}\fmapx{x_i} + b + e_i \sepfor i=1, \dotsc, N \eeq{.}}
 \end{equation*}
\end{myprob}

On the other hand, the unsupervised learning problem of Kernel PCA (\kpca{}) can also be obtained from the following variational principle~\cite{IEEE_Suykens,BookSuykens}, considering the dataset $\set{x_i}_{i=1}^{N}$ where $x_i \in \Xset$ and a feature map $\fmap : \Xset \to \R^h$.
\begin{myprob}[\kpca{}] For $\gamma > 0$, the problem is:
\label{ProbKPCA}
 \begin{equation*}
  \minpcl{w, e_i}{\frac{1}{2} w^{T} w - \frac{\gamma}{2} \sum_{i=1}^{N} e_i^2}{e_i = w^{T}\fmapx{x_i} \sepfor i=1, \dotsc, N \eeq{.}}
 \end{equation*}
\end{myprob}
In general, \ref{ProbKPCA} does not have a solution though it is of interest as we shall explain in the sequel. The first term is interpreted as a regularization term while the second term is present in order to maximize the variance. Hence, the solutions provide the  model $w^{T} \fmapx{x}$ for the principal components in feature space. Indeed, in~\cite{BookSuykens} it is shown that a solution to the optimality conditions has to satisfy $K \alpha = \prn{1/\gamma} \alpha$, where $K$ is the kernel matrix, $K_{ij} = \fmapx{x_i}^{T} \fmapx{x_j}$. This is only possible if $\gamma = 1/\lambda$, where $\lambda$ is an eigenvalue of $K$. Noticeably, a reweighted version can be used for formulating a spectral clustering problem~\cite{IEEE_Alzate}.

Let us denote the eigenvalues of $K$ of rank $r \leq N$ by $\eigval{\max} = \eigval{1} \geq \eigval{2} \geq \dots \geq \eigval{r}$ and its eigenvectors $\eigvec{1}, \dotsc,\eigvec{r}$. We shall now formulate a convex optimization problem for the different principal components, on a general feature space.

\section{Convex Formulation of \kpca{}}
\label{SecConvexOpt}

We will reformulate now the problem introduced in~\cite{IEEE_Suykens} for \kpca{}. Although the resulting models are the same, we get a convex problem with our formulation whose minimum coincides with the kernel principal components of the data for particular choices of the regularization parameter $\gamma$.

For the sake of generality, we shall consider that the feature space is a separable Hilbert space $\prn{\Hsp, \inp{\cdot}{\cdot}}$, which can be infinite dimensional.
We assume also that $\Hsp$ is defined over the reals so that the inner product verifies $\inp{\phi}{\chi} = \inp{\chi}{\phi}$. Hence, the feature map is $\fmap : \Xset \to \Hsp$. For convenience, we shall use the bra-ket notation~\cite{Dirac}: an element of $\Hsp$ will be written $\ket{\phi}$ whereas an element of its Fr\'echet-Riesz dual is $\bra{\phi} \in \Hsp^* \sim \Hsp$. We shall consider a finite dimensional Hilbert space $\prn{\Esp,\inpe{\cdot}{\cdot}}$ of dimension $N$ with an orthonormal basis $\eb_1, \dotsc, \eb_N$ and the associated dual basis $\eb^*_1, \dots, \eb^*_N$, verifying $\eb^*_i\prn{\eb_j} = \delta_{i,j}$, for $i, j = 1, \dotsc, N$. For simplicity, the dual basis can be thought of as the canonical basis of row vectors, while the primal basis is given by the canonical basis of column vectors.
Let us define the linear operators $\Phi: \Esp \to \Hsp$ and $\Phi^*: \Hsp \to \Esp$  given by the finite sums $\Phi = \sum_{i=1}^{N} \ket{\fmapx{x_i}} \eb_i^*$ and $\Phi^* = \sum_{i=1}^{N} \eb_i \bra{\fmapx{x_i}}$.
Hence, we can introduce the linear operators $\Phi \circ \Phi^* : \Hsp \to \Hsp$, given by the covariance $\covard = \sum_{i=1}^{N} \ket{\fmapx{x_i}} \bra{\fmapx{x_i}}$, while the kernel operator $\kernelp : \Esp \to \Esp$ is given by $\kernelp = \sum_{i,j = 1}^{N} K\prn{x_j,x_i} \eb_j \eb_i^*$, where we have identified the space of linear operators with $\Hsp \times \Hsp^*$ and $\Esp \times \Esp^*$, respectively.
We shall show that these two operators have the same rank.
\begin{mylemma}
 The operators $\covard$ and $\kernelp$ are self-adjoint, positive semi-definite and have the same non-zero eigenvalues. They have the same rank $r$, i.e. they share $r \leq N$ non-zero eigenvalues $\eigval{1} \geq \eigval{2} \geq\dots \geq \eigval{r} > 0$.
\end{mylemma}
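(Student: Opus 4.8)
The plan is to treat this as the classical statement that $\covard = \Phi \circ \Phi^*$ and $\kernelp = \Phi^* \circ \Phi$ share their non-zero spectrum, being careful only about the possibly infinite dimension of $\Hsp$. First I would verify that $\Phi^*$ as defined really is the adjoint of $\Phi$: for $v \in \Esp$ and $\phi \in \Hsp$ a direct computation gives $\inp{\phi}{\Phi v} = \sum_i \eb_i^*\prn{v}\inp{\phi}{\fmapx{x_i}}$ and $\inpe{\Phi^* \phi}{v} = \sum_i \eb_i^*\prn{v}\inp{\fmapx{x_i}}{\phi}$, which agree precisely because $\Hsp$ is real. Self-adjointness of both composites is then immediate, since $\prn{\Phi\Phi^*}^* = \Phi\Phi^*$ and $\prn{\Phi^*\Phi}^* = \Phi^*\Phi$. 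Positive semi-definiteness follows by writing each quadratic form as a squared norm: $\inp{\phi}{\covard\, \phi} = \inpe{\Phi^*\phi}{\Phi^*\phi} = \|\Phi^*\phi\|^2 \geq 0$ and $\inpe{v}{\kernelp\, v} = \inp{\Phi v}{\Phi v} = \|\Phi v\|^2 \geq 0$.

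The heart of the argument is the correspondence of non-zero eigenvalues. Suppose $\kernelp v = \lambda v$ with $\lambda \neq 0$ and $v \neq 0$; applying $\Phi$ and using the intertwining identity $\Phi\kernelp = \covard\, \Phi$ gives $\covard\prn{\Phi v} = \lambda\prn{\Phi v}$, and the norm identity $\|\Phi v\|^2 = \inpe{v}{\kernelp\, v} = \lambda\|v\|^2 \neq 0$ guarantees $\Phi v \neq 0$, so $\Phi v$ is a genuine eigenvector of $\covard$ for the same $\lambda$. Symmetrically, $\Phi^*$ carries each non-zero eigenvector of $\covard$ to one of $\kernelp$, with the analogous non-vanishing guarantee. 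Since $\Phi^*\Phi$ acts as $\lambda\cdot\mathrm{Id}$ on the $\lambda$-eigenspace of $\kernelp$, and $\Phi\Phi^*$ acts as $\lambda\cdot\mathrm{Id}$ on that of $\covard$, these two transport maps are mutually inverse up to the factor $\lambda$; hence each is injective on the relevant eigenspace and the multiplicity of every non-zero eigenvalue is the same for both operators.

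The step requiring the most care — and the main obstacle in the general Hilbert-space setting — is to make the notions ``eigenvalue'', ``rank'' and the ordering $\eigval{1}\geq\dots\geq\eigval{r}>0$ meaningful when $\Hsp$ is infinite dimensional. I would resolve this by observing that $\covard = \sum_{i=1}^{N} \ket{\fmapx{x_i}}\bra{\fmapx{x_i}}$ is a finite sum of rank-one operators, hence a finite-rank self-adjoint and therefore compact operator; the spectral theorem for compact self-adjoint operators then supplies an orthonormal eigenbasis with only finitely many non-zero eigenvalues, and its range lies in $\spans{\fmapx{x_1},\dots,\fmapx{x_N}}$, so its rank is finite and equals the number of non-zero eigenvalues counted with multiplicity. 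The same count holds trivially for $\kernelp$ on the $N$-dimensional space $\Esp$, and by the eigenvalue correspondence just established the two counts coincide. This yields the common rank $r \leq N$ and the shared non-zero eigenvalues $\eigval{1}\geq\dots\geq\eigval{r}>0$, completing the proof.
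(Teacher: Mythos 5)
Your proof is correct and follows essentially the same route as the paper: the intertwining identities $\Phi \circ \prn{\kernelp} = \prn{\covard} \circ \Phi$ and its mirror image, used to transport eigenvectors between $\Esp$ and $\Hsp$ in both directions. You are in fact more careful than the paper's own two-line argument, which omits the verification that $\Phi v \neq 0$ (your norm identity $\inp{\Phi v}{\Phi v} = \lambda \inpe{v}{v} \neq 0$), the multiplicity count giving equal rank $r \leq N$, and the finite-rank/compactness observation that legitimizes the spectral language for $\covard$ when $\Hsp$ is infinite dimensional --- all of which the lemma's statement actually requires.
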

\begin{proof}
 Self-adjointness is a consequence of the definition of the inner products. The equivalence of the non-zero eigenvalues is proved as follows.
 \begin{enumerate*}[(i)]
  \item Let us suppose that $\prn{\kernelp} \eigveca{} = \eigval{} \eigveca{}$ with $\eigval{} \neq 0$. Acting on both sides with $\Phi$ gives: $\Phi \circ \prn{\kernelp} \eigveca{} = \prn{\covard} \circ \Phi \eigveca{} = \eigval{} \Phi \eigveca{}$. Hence, we have that $\Phi \eigveca{}$ is an eigenvector of $\covard$ with eigenvalue $\eigval{}$.
  \item Similarly, if we assume that $\prn{\covard} \ket{\eigvec{}} = \eigval{} \ket{\eigvec{}}$ with $\eigval{} \neq 0$. Then, acting on this equation with $\Phi^*$ gives $\prn{\kernelp} \circ \Phi^* \ket{\eigvec{}} = \eigval{} \Phi^* \ket{\eigvec{}}$. Therefore, $\Phi^* \ket{\eigvec{}}$ is an eigenvector of $\kernelp$ with eigenvalue $\eigval{}$.
 \end{enumerate*}
 This proves that $\kernelp$ and $\covard$ share the same non-zero eigenvalues.
\end{proof}

Incidentally, the zero eigenvalues do not in general match.
As a result, we can write in components the eigenvalue equation for the kernel matrix as $\sum_{j=1}^{N} K\prn{x_i,x_j} \eigveca{\ell}^j= \eigval{\ell} \eigveca{\ell}^i$, with $\eigveca{\ell}^i = \inp{\fmapx{x_i}}{\eigvec{\ell}}$, which are the components of the eigenvectors $\eigveca{\ell} = \sum_{i=1}^{N} \eigveca{\ell}^i \eb_i$ of the operator $\kernelp$.
Furthermore, if the eigenvectors  are normalized as  $\inp{\eigvec{\ell}}{\eigvec{\ell}} = 1$, then we have $\eigveca{k}^* \eigveca{k} = \eigval{k}$, for $k = 1, \dotsc, r \leq N$.
For the sake of simplicity, the case where the eigenvalues are degenerate will not be treated explicitly. It is however straightforward to consider it.

\begin{mylemma}
\label{LemSpecRep}
 We have the spectral representations $\kernelp = \sum_{\ell=1}^{r} \eigveca{\ell} \eigveca{\ell}^*$ and $\covard = \sum_{\ell=1}^{r} \eigval{\ell} \ket{\eigvec{\ell}} \bra{\eigvec{\ell}}$, while the identity over $\Hsp$ can be written as $I_\Hsp = \sum_{\ell=1}^{r} \ket{\eigvec{\ell}} \bra{\eigvec{\ell}} + \proj_0$, where $\proj_0$ is the projector on the null space of $\covard$.
\end{mylemma}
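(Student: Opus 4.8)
The plan is to take the resolution of the identity as the primitive fact and then read off both spectral representations from it. The essential structural input is that $\covard$ is a \emph{finite-rank} operator: it is a sum of $N$ rank-one terms $\ket{\fmapx{x_i}}\bra{\fmapx{x_i}}$, hence compact on the separable Hilbert space $\Hsp$, even when $\Hsp$ is infinite dimensional. Combined with self-adjointness and positive semi-definiteness (established in the preceding lemma), the spectral theorem for compact self-adjoint operators provides a complete orthonormal system of eigenvectors. Only $r$ of them, namely $\eigvec{1}, \dotsc, \eigvec{r}$, carry the non-zero eigenvalues $\eigval{1}, \dotsc, \eigval{r}$; all remaining eigenvectors span the null space $\ker\prn{\covard}$, onto which $\proj_0$ is the orthogonal projector. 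Decomposing the identity as the sum of the orthogonal projectors onto these mutually orthogonal eigenspaces gives $I_\Hsp = \sum_{\ell=1}^{r} \ket{\eigvec{\ell}}\bra{\eigvec{\ell}} + \proj_0$, which is the third claim.

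The spectral representation of $\covard$ then follows by composing this identity on the right with $\covard$: since $\covard\ket{\eigvec{\ell}} = \eigval{\ell}\ket{\eigvec{\ell}}$ and $\covard\proj_0 = 0$, one obtains $\covard = \covard\circ I_\Hsp = \sum_{\ell=1}^{r}\eigval{\ell}\ket{\eigvec{\ell}}\bra{\eigvec{\ell}}$. For the kernel operator I would avoid invoking a second spectral theorem on $\Esp$ and instead transport the $\Hsp$ result through $\Phi$ and $\Phi^*$. From $\Phi^* = \sum_{i=1}^{N}\eb_i\bra{\fmapx{x_i}}$ together with $\eigveca{\ell}^i = \inp{\fmapx{x_i}}{\eigvec{\ell}}$ one reads off $\Phi^*\ket{\eigvec{\ell}} = \eigveca{\ell}$, so that $\eigveca{\ell}\eigveca{\ell}^* = \Phi^*\ket{\eigvec{\ell}}\bra{\eigvec{\ell}}\Phi$ after taking adjoints. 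Summing over $\ell$ and sandwiching the resolution of the identity yields $\sum_{\ell=1}^{r}\eigveca{\ell}\eigveca{\ell}^* = \Phi^*\prn{I_\Hsp - \proj_0}\Phi = \kernelp - \Phi^*\proj_0\Phi$.

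It therefore remains to show that $\Phi^*\proj_0 = 0$, which is the one point that deserves care. For any $\ket{\psi}$ in the range of $\proj_0$, i.e. with $\covard\ket{\psi} = 0$, positive semi-definiteness gives $0 = \inpt{\psi}{\covard}{\psi} = \inpe{\Phi^*\psi}{\Phi^*\psi}$, where I used that $\Phi^*$ is the adjoint of $\Phi$; hence $\Phi^*\ket{\psi} = 0$. Thus $\Phi^*\proj_0\Phi = 0$ and $\sum_{\ell=1}^{r}\eigveca{\ell}\eigveca{\ell}^* = \kernelp$, as claimed. The same bookkeeping, with $\eigvec{\ell}$ taken unit-normalized, also reproduces the normalization recorded earlier, since $\eigveca{\ell}^*\eigveca{\ell} = \inpt{\eigvec{\ell}}{\covard}{\eigvec{\ell}} = \eigval{\ell}$.

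The genuinely delicate step is the invocation of the spectral theorem on a possibly infinite-dimensional $\Hsp$; this is exactly where the finite-rank, hence compact, nature of $\covard$ is essential, since it confines all the spectral content to the $r$-dimensional range and relegates everything else to $\proj_0$. Once this is in place, the rest is manipulation in the bra-ket notation and the adjoint relation between $\Phi$ and $\Phi^*$, and the three representations fall out of the single resolution of the identity.
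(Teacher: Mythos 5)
Your proof is correct, and it is considerably more detailed than the paper's, which disposes of the lemma in a single sentence by citing the fact that eigenvectors of self-adjoint operators form a basis of the Hilbert space, implicitly applied on both $\Hsp$ and $\Esp$. Two things distinguish your route. First, you supply the justification the paper leaves implicit in the infinite-dimensional case: a bounded self-adjoint operator in general has only a spectral measure, not an eigenbasis, so some compactness input is genuinely needed, and your observation that $\covard$ is finite rank (hence compact, with all non-zero spectral content confined to the $r$-dimensional range and everything else relegated to $\proj_0$) is exactly the right one --- in fact finite rank alone already suffices, since $\Hsp = \mathrm{ran}\prn{\covard} \oplus \ker\prn{\covard}$ with the range closed and finite dimensional, after which the finite-dimensional spectral theorem applies on the range. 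Second, instead of invoking the spectral theorem a second time on $\Esp$ for $\kernelp$, you transport the $\Hsp$-side resolution of the identity through $\Phi$ and $\Phi^*$, the one non-trivial extra step being $\Phi^* \proj_0 = 0$, i.e. $\ker\prn{\covard} = \ker\prn{\Phi^*}$, which you establish correctly from positive semi-definiteness via $0 = \inpt{\psi}{\covard}{\psi} = \inpe{\Phi^*\psi}{\Phi^*\psi}$. This transport argument buys a small bonus: the computation $\sum_{\ell=1}^{r} \eigveca{\ell}\eigveca{\ell}^* = \Phi^*\prn{I_\Hsp - \proj_0}\Phi = \kernelp$ automatically produces the non-normalized dyads $\eigveca{\ell}\eigveca{\ell}^*$ with the correct scaling $\eigveca{\ell}^*\eigveca{\ell} = \eigval{\ell}$, whereas a direct spectral decomposition on $\Esp$ would require a separate normalization check to match the paper's convention $\inp{\eigvec{\ell}}{\eigvec{\ell}} = 1$. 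I see no gaps.
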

\begin{proof}
 This is consequence that the eigenvectors of self-adjoint operators form a basis of the Hilbert space~\cite{Berberian}.
\end{proof}

We can formulate now a convex optimization problem for the $\prn{k+1}$-th largest principal component of the kernel matrix.
Let us assume that the eigenvalues of $\covard$ are sorted in decreasing order, i.e. $\eigval{1} \geq \eigval{2} \geq \dots \geq \eigval{r}$, with the corresponding orthonormal eigenvectors $\ket{\eigvec{1}}, \ket{\eigvec{2}}, \dotsc, \ket{\eigvec{r}}$.
\begin{myprob}[Constrained Formulation of \kpca{}]
\label{ProbKPCAConv}
 For a fixed value of $k$, with $1 \leq k \leq r-1$, the primal problem is:
 \begin{equation}
  \label{EqKPCAConv}
  \minpcla{\ket{w}, e_i}{\frac{1}{2} \inp{w}{w} - \frac{\gamma}{2} \sum_{i=1}^{N} e_i^2}{e_i = \inp{w}{\fmapx{x_i}} \sepfor i=1, \dotsc, N \eeq{,} \\ \inp{w}{\eigvec{\ell}} = 0 \sepfor \ell=1, \dots, k \eeq{.}}
 \end{equation}
 When $k = 0$, the problem should be understood without constraints on $\ket{w}$, and thus \ref{ProbKPCA} is recovered over $\Hsp$.
\end{myprob}
Therefore, \ref{ProbKPCAConv} is similar to \ref{ProbKPCA} but with $k$ additional constraints that limit the feasible set for $\ket{w}$.
This problem is interesting from the methodological point of view because it specifies which optimization problem the computation of the kernel principal components is solving.
Indeed, the optimization is maximizing the variance while the model is simultaneously required to be regular. We state now a useful result concerning the existence of solutions.

\begin{mythm}[Convexity and Solutions of \kpca{}]
\label{TheoKPCA}
 \Ref{ProbKPCAConv} is convex for $0 \leq k \leq r-1$ iff $\gamma \leq 1/\eigval{k+1}$.
 In particular:
 \begin{enumerate}[(i)]
  \item If $\gamma < 1 / \eigval{k+1}$, the problem admits only the trivial solution $\ket{\opt{w}} = 0$ and  $e_i = 0$ for $i=1, \dots, N$.
  \item If $\gamma = 1 / \eigval{k+1}$, the problem admits as solutions $\ket{\opt{w}} \propto \ket{\eigvec{k+1}}$, $e_i \propto \inp{\eigvec{k+1}}{\fmapx{x_i}}$, for $i=1, \dots, N$.
  \item If $\gamma > 1/\eigval{k+1}$, the problem has no bounded solution.
 \end{enumerate}
\end{mythm}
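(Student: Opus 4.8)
The plan is to eliminate the slack variables and reduce \ref{ProbKPCAConv} to the minimization of a single quadratic form over a linear subspace, then to diagonalize that form using the spectral representation of \ref{LemSpecRep}. Substituting the constraints $e_i = \inp{w}{\fmapx{x_i}}$ into the objective and using $\sum_{i=1}^{N} \inp{w}{\fmapx{x_i}}^2 = \inpt{w}{\covard}{w}$, the objective collapses to the quadratic form $f\prn{\ket{w}} = \frac{1}{2}\inpt{w}{I_\Hsp - \gamma\,\covard}{w}$, whose Hessian is the self-adjoint operator $I_\Hsp - \gamma\,\covard$. The remaining equality constraints $\inp{w}{\eigvec{\ell}} = 0$ for $\ell = 1, \dots, k$ merely restrict $\ket{w}$ to the orthogonal complement $V$ of $\spans{\eigvec{1}, \dots, \eigvec{k}}$.

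Next I would decompose any feasible $\ket{w}$ in the eigenbasis. By \ref{LemSpecRep} and orthonormality, $V$ is spanned by $\eigvec{k+1}, \dots, \eigvec{r}$ together with the null space of $\covard$, so I can write $\ket{w} = \sum_{\ell=k+1}^{r} c_\ell \ket{\eigvec{\ell}} + \ket{w_0}$ with $\ket{w_0}$ in that null space. Since $\covard \ket{w_0} = 0$ and $\proj_0$ contributes the identity on the null space, the form separates as $f\prn{\ket{w}} = \frac{1}{2}\sum_{\ell=k+1}^{r}\prn{1 - \gamma \eigval{\ell}} c_\ell^2 + \frac{1}{2}\inp{w_0}{w_0}$. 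This diagonal expression is the crux of the argument: it exhibits the eigenvalues of the restricted Hessian as $1 - \gamma \eigval{\ell}$ for $\ell = k+1, \dots, r$, together with the value $1$ on the null space.

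Convexity is then equivalent to positive semi-definiteness of the restricted Hessian, which holds iff every coefficient $1 - \gamma \eigval{\ell}$ is non-negative; because the eigenvalues are sorted, the binding one is $\ell = k+1$, giving the stated condition $\gamma \leq 1/\eigval{k+1}$. The three regimes follow immediately from the sign of $1 - \gamma \eigval{k+1}$ read off the diagonal form. If $\gamma < 1/\eigval{k+1}$, all coefficients and the null-space term are strictly positive, so $f \geq 0$ with equality only at $\ket{w} = 0$, whence $e_i = 0$. If $\gamma = 1/\eigval{k+1}$, the coefficient of $c_{k+1}^2$ vanishes while the others remain strictly positive, so the minimizers are precisely $\ket{\opt{w}} \propto \ket{\eigvec{k+1}}$ with the associated $e_i \propto \inp{\eigvec{k+1}}{\fmapx{x_i}}$. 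If $\gamma > 1/\eigval{k+1}$, the coefficient of $c_{k+1}^2$ is negative, so sending $c_{k+1} \to \infty$ drives $f \to -\infty$ and no bounded solution exists.

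I expect the only delicate point to be the bookkeeping of the feasible subspace in the possibly infinite-dimensional $\Hsp$: one must check that $\eigvec{k+1}$ is itself feasible, which it is since it is orthogonal to $\eigvec{1}, \dots, \eigvec{k}$, and that the null space of $\covard$ enters only through the strictly positive identity term $\frac{1}{2}\inp{w_0}{w_0}$, so it can neither destroy convexity nor furnish a descent direction. Because $\covard$ has finite rank $r$, all the genuine analysis is confined to the finite-dimensional span of $\eigvec{k+1}, \dots, \eigvec{r}$, and the rest is routine.
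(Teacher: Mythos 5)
Your proof is correct and follows essentially the same route as the paper's: substituting the $e_i$ constraints to obtain the quadratic form $\frac{1}{2}\inpt{w}{I_\Hsp - \gamma \covard}{w}$, diagonalizing via the spectral representation of \ref{LemSpecRep} to get $\frac{1}{2}\sum_{\ell=k+1}^{r}\prn{1-\gamma\eigval{\ell}}\prn{\inp{\eigvec{\ell}}{w}}^2$ plus the strictly positive null-space term, and reading off the three regimes from the sign of $1-\gamma\eigval{k+1}$. Your explicit treatment of the unbounded case and of feasibility of $\ket{\eigvec{k+1}}$ only makes what the paper leaves implicit a touch more complete.
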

\begin{proof}
 Substituting the first constraint in the objective function, we obtain the problem
 \begin{equation}
 \label{FiniteSum}
  \minpcl{\ket{w} \in \Hsp}{\frac{1}{2} \inpt{w}{I_\Hsp - \gamma \covard}{w}}{\inp{w}{\eigvec{\ell}} = 0 \sepfor \ell=1, \dots, k \eeq{.}}
 \end{equation}
 Then, it is clear from the spectral representation of $\covard$ given in \ref{LemSpecRep} that the objective function of \ref{FiniteSum} reduces to a finite sum. We can write $\ket{w} = \sum_{\ell=1}^{r} \ket{\eigvec{\ell}} \inp{\eigvec{\ell}}{w} + \proj_0 \ket{w}$, and, therefore, solving the whole set of constraints in the objective function \ref{FiniteSum}, the unconstrained problem reads
\begin{equation}
\label{FiniteSumDecom}
 \minp{\ket{w} \in \Hsp^\perp}{\frac{1}{2} \inp{\proj_0 w}{\proj_0 w} + \frac{1}{2} \sum_{\ell={k+1}}^{r}\prn{1 - \gamma \eigval{\ell}} \prn{\inp{\eigvec{\ell}}{w}}^2} \eeq{,}
\end{equation}
where we have defined $\Hsp = \Hsp^\perp + \spans{\ket{\eigvec{1}}, \dotsc, \ket{\eigvec{k}}}$.
The objective of \ref{FiniteSumDecom} is obviously convex iff $\gamma \leq 1/\eigval{k+1} \leq \dots \leq 1/\eigval{N}$. Moreover, we have $\proj_0 \ket{w} = 0$ since this is the only minimizer of the first term of \ref{FiniteSumDecom}. Finally, if $\gamma < 1/\eigval{k+1}$ the only solution is $\ket{w} = 0$. Otherwise, if $\gamma = 1/\eigval{k+1}$, the solutions form the vector subspace $\spans{\ket{\eigvec{k+1}}}$.
\end{proof}
The idea of \ref{TheoKPCA} is illustrated in \ref{FigSurfacesA,FigSurfacesB,FigSurfacesC,FigSurfacesD,FigSurfacesE} for a two-dimensional case. Moreover, \ref{FigSchemes} shows the different values for which the problem has a solution.
To put it in a nutshell, we are minimizing the objective of \ref{EqKPCAConv}, which is purely a sum of quadratic terms. If the first term dominates the second term in all directions, then the objective will always be greater or equal to $0$, and thus the minimum will be the trivial solution $\ket{w} = 0$ (\ref{FigSurfacesA}). If the dominant quadratic part of both terms is the same, we get a subspace of non-trivial solutions, all with objective $0$ (\ref{FigSurfacesB}). If the second term dominates the first one in some direction, then the objective is not lower bounded, so there is no bounded solution (\ref{FigSurfacesC}). Nevertheless, if we remove the dominant directions of the second term (the appropriate eigenvectors), we can recover the previous two cases (convex black curve in \ref{FigSurfacesC}).

\begin{mfiguretikz}{\label{FigSurfaces} Example of the objective function of \ref{ProbKPCAConv,ProbSKPCA} for two-dimensional data (with the constraints on $e_i$ substituted).
Regarding \kpca{}, in the case of \ref{FigSurfacesA} the problem has only one solution; in \ref{FigSurfacesB} the problem has a one-dimensional subspace of solutions given by $\eigvec{1}$. \Ref{FigSurfacesC,FigSurfacesD,FigSurfacesE} have no bounded solution; nevertheless, if we ignore the direction of the first eigenvector, \ref{FigSurfacesC} has one solution and \ref{FigSurfacesD} has a subspace of solutions given by $\eigvec{2}$, whereas \Ref{FigSurfacesE} is not bounded in any direction.
With respect to \skpca{}, in the case of \ref{FigSurfacesSA} the problem has only one solution; in \ref{FigSurfacesSB} the problem has no bounded solution since the objective goes to $- \infty$ in the direction of $\eigvec{1}$. If we ignore the direction of the first eigenvector, \ref{FigSurfacesSB,FigSurfacesSC} has one solution, the objective in \ref{FigSurfacesSD} goes to $- \infty$ in the direction of $\eigvec{2}$, and \ref{FigSurfacesSE} is not bounded in any direction.}
 \iftikz\input{./Figures/Tikz/SurfaceKPCA}\fi
 \iftikz\input{./Figures/Tikz/SurfaceSKPCA}\fi
 \tikzwidth{0.19\textwidth}
 \rotatebox{90}{\footnotesize\qquad\kpca{}}\quad%
 \subfloat[\label{FigSurfacesA}$\gamma < 1/\eigval{1}$.]{\includetikzf{\plotSurface{0.125}}}\hfill%
 \subfloat[\label{FigSurfacesB}$\gamma = 1/\eigval{1}$.]{\includetikzf{\plotSurface{0.25}}}\hfill%
 \subfloat[\label{FigSurfacesC}$1/\eigval{1} < \gamma < 1/\eigval{2}$.]{\includetikzf{\plotSurface{0.5}}}\hfill%
 \subfloat[\label{FigSurfacesD}$\gamma = 1/\eigval{2}$.]{\includetikzf{\plotSurface{1}}}\hfill%
 \subfloat[\label{FigSurfacesE}$\gamma > 1/\eigval{2}$.]{\includetikzf{\plotSurface{4}}}\\
 \rotatebox{90}{\footnotesize\qquad\skpca{}}\quad%
 \subfloat[\label{FigSurfacesSA}$\gamma < 1/\eigval{1}$.]{\includetikzf{\plotSurfaceS{0.125}}}\hfill%
 \subfloat[\label{FigSurfacesSB}$\gamma = 1/\eigval{1}$.]{\includetikzf{\plotSurfaceS{0.25}}}\hfill%
 \subfloat[\label{FigSurfacesSC}$1/\eigval{1} < \gamma < 1/\eigval{2}$.]{\includetikzf{\plotSurfaceS{0.5}}}\hfill%
 \subfloat[\label{FigSurfacesSD}$\gamma = 1/\eigval{2}$.]{\includetikzf{\plotSurfaceS{1}}}\hfill%
 \subfloat[\label{FigSurfacesSE}$\gamma > 1/\eigval{2}$.]{\includetikzf{\plotSurfaceS{4}}}
\end{mfiguretikz}

\begin{figure}[t]
\begin{center}
 \includetikzf{\input{./Figures/Tikz/SchemeV1A.tikz}}\\
 \includetikzf{\input{./Figures/Tikz/SchemeV1B.tikz}}
 \caption[Schemes of convexity.]{\label{FigSchemes}Schemes of the convexity for different values of $\gamma$, both for \kpca{} and \skpca{}. Legend: \showline{coltriv} trivial solution ($\ket{w} = 0$); \showline{colntriv} non-trivial solution; \showline{colnboun} absence of bounded solution.}
\end{center}
\end{figure}

As a matter of fact, the convex formulation given in \ref{ProbKPCAConv} does not provide any new algorithmic approach for computing the principal components. Rather, it is of methodological interest since it provides a framework which potentially allows to define extensions of the current definition.

\section{Semi-Supervised Learning}
\label{SecSemiSup}

A semi-supervised learning problem is defined here by starting from a set of pairs of data points and labels $\set{\prn{x_i,y_i}}_{i=1}^{N}$ where $y_i = \pm 1$ if the point $x_i \in \Xset$ is labeled in the class $+1$ or $-1$ respectively, and $y_i = 0$ if $x_i$ is unlabeled. Following the book~\cite{3603}, we make the smoothness assumption, i.e. if two points are in the same high-density region they should be similar.
Based on the intuition gained from \ref{ProbSSVM,ProbKPCAConv}, we propose to define a hybrid optimization problem, relying on the smoothing criterion provided by the eigenvector associated to the $\prn{k+1}$-th largest eigenvalue of the kernel. In particular, we will combine the negative error term of \kpca{} with an LS-SVM-like loss term, what results in an optimization problem which is convex for the appropriate regularization parameter $\gamma$ and imposing certain constraints.

Let us firstly define the projectors on the first $k$ eigenvectors
\begin{equation}
\label{Projectors}
 \Pi_k = \sum_{\ell=1}^{k} \ket{\eigvec{\ell}} \bra{\eigvec{\ell}} \sepeq
 P_k = \sum_{\ell=1}^{k} \eigveca{\ell} \eigveca{\ell}^* \eeq{,}
\end{equation}
with the notations $\eigveca{\ell}^i = \inp{\fmapx{x_i}}{\eigvec{\ell}}$ and $\inp{\eigvec{\ell}}{\eigvec{\ell}} = 1$. Let us note also that $P_k = \Phi^* \circ \Pi_k \circ \Phi$.
The proposed semi-supervised \kpca{} (\skpca{}) is defined as follows.
\begin{myprob}[\skpca{}]
\label{ProbSKPCA}
 For a chosen $k$ satisfying $1 \leq k \leq r-1$, the following problem is defined:
 \begin{equation}
 \label{EqSKPCA}
  \minpcla{\ket{w}, e_i}{\frac{1}{2} \inp{w}{w} - \frac{\gamma}{2} \sum_{i=1}^{N} e_i^2}{e_i = y_i + \inp{w}{\fmapx{x_i}} \sepfor i=1, \dotsc, N \eeq{,} \\ \inp{w}{\eigvec{\ell}} = 0 \sepfor \ell=1, \dots, k \eeq{.}}
 \end{equation}
 When $k = 0$, the constraints on $\ket{w}$ are removed.
\end{myprob}

In the absence of labels, \ref{ProbSKPCA} reduces to \ref{ProbKPCAConv}. The meaning of the problem can be explained as follows.
\begin{enumerate*}[(i)]
 \item For unlabeled points, \ref{ProbSKPCA} is unsupervised and the variational principle requires a smooth solution, i.e. minimizing $\inp{w}{w}$, and maximizing at the same time the variance as in \ref{ProbKPCAConv}.
 \item For a labeled point $x_i$ with $y_i = \pm 1$, the second term of \ref{ProbSKPCA} can be interpreted as a concave error term. It requires $\prn{y_i + \inp{w}{\fmapx{x_i}}}^2$ to be large, so a favourable case arises when $\inp{w}{\fmapx{x_i}}$ and $y_i$ have the same sign. If the regularization term dominates, i.e. if $|\inp{w}{\fmapx{x_i}}|$ is large enough, the model can still predict an opposite class for $x_i$. which is interesting if some data points are incorrectly labeled.
\end{enumerate*}

Notice that the model can only be evaluated, without using additional numerical approximations or solving again the optimization problem, over the $N$ given data used to pose and solve \ref{ProbSKPCA}, both the labeled and the unlabeled points. This evaluation, formulated as $\inp{w}{\fmapx{x_i}}$, is given in terms of the dual solution $\alpha$ by $\hat{y} = \sign\brq{\prn{K - P_k}\alpha}$.

Our second main result, \ref{TheoSKPCA}, states the existence of solutions to \ref{ProbSKPCA}, based on the following known lemma.
\begin{mylemma}
\label{LemSKPCA}
 The solution $\ket{\opt{w}}$ of \ref{ProbSKPCA} verifies $\ket{\opt{w}} \in \spans{\ket{\fmapx{x_1}}, \dotsc, \ket{\fmapx{x_N}}} = \spans{\ket{\eigvec{1}}, \dotsc, \ket{\eigvec{r}}}$, so that $\ket{\opt{w}}$ belongs to a subspace of dimension $r \leq N$.
\end{mylemma}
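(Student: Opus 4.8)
The plan is to prove this by a representer-theorem argument: the objective and all constraints of \ref{ProbSKPCA} depend on $\ket{w}$ only through the inner products $\inp{w}{\fmapx{x_i}}$ (which fix the $e_i$) and through the norm $\inp{w}{w}$, so any part of $\ket{w}$ orthogonal to the data subspace leaves the constraints and the error term untouched while it can only increase the convex norm term. Such a component must therefore vanish at a minimizer. Concretely, I would set $V = \spans{\ket{\fmapx{x_1}}, \dotsc, \ket{\fmapx{x_N}}}$, which is precisely the range of $\Phi$, and decompose an arbitrary feasible $\ket{w} = \ket{w_\parallel} + \ket{w_\perp}$ with $\ket{w_\parallel} \in V$ and $\ket{w_\perp} \in V^\perp$.

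Since $\ket{w_\perp}$ is orthogonal to every $\ket{\fmapx{x_i}}$, one has $\inp{w}{\fmapx{x_i}} = \inp{w_\parallel}{\fmapx{x_i}}$, so the slack variables $e_i = y_i + \inp{w}{\fmapx{x_i}}$, and hence the concave term $-\frac{\gamma}{2}\sum_{i=1}^{N} e_i^2$, are unchanged when $\ket{w}$ is replaced by $\ket{w_\parallel}$. Next I would check that the orthogonality constraints are preserved. As shown in the proof of the first lemma of this section, $\Phi \eigveca{\ell}$ is an eigenvector of $\covard$ with eigenvalue $\eigval{\ell} > 0$, so each $\ket{\eigvec{\ell}}$ is proportional to $\Phi \eigveca{\ell}$ and therefore lies in $V$. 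Consequently $\inp{w_\perp}{\eigvec{\ell}} = 0$, and the constraints $\inp{w}{\eigvec{\ell}} = 0$ are equivalent to $\inp{w_\parallel}{\eigvec{\ell}} = 0$; thus $\ket{w_\parallel}$ is feasible whenever $\ket{w}$ is.

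By orthogonality, $\inp{w}{w} = \inp{w_\parallel}{w_\parallel} + \inp{w_\perp}{w_\perp}$, so the objective at $\ket{w_\parallel}$ is smaller than at $\ket{w}$ by exactly $\frac{1}{2}\inp{w_\perp}{w_\perp} \ge 0$. Hence at any minimizer $\ket{\opt{w}}$ we must have $\ket{w_\perp} = 0$, i.e. $\ket{\opt{w}} \in V$. The step worth stating carefully is precisely that deleting $\ket{w_\perp}$ does not perturb the concave term: this holds only because $\ket{w_\perp} \perp \ket{\fmapx{x_i}}$ for all $i$, and it is the one place where the negative sign in front of $\sum_i e_i^2$ might otherwise have caused trouble.

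Finally, the identity $V = \spans{\ket{\eigvec{1}}, \dotsc, \ket{\eigvec{r}}}$ follows from the spectral representation $\covard = \sum_{\ell=1}^{r} \eigval{\ell} \ket{\eigvec{\ell}} \bra{\eigvec{\ell}}$ in \ref{LemSpecRep}: the range of $\covard = \Phi \circ \Phi^*$ coincides with the range of $\Phi$, namely $V$, and is spanned by the $r$ eigenvectors with non-zero eigenvalue, giving $\dim V = r \le N$. I expect the main obstacle to be not the variational argument, which is routine, but making the two supporting facts precise in the possibly infinite-dimensional $\Hsp$ — namely that each $\ket{\eigvec{\ell}} \in V$ and that $V$ has dimension exactly $r$ — both of which are already supplied by the shared-spectrum lemma and by \ref{LemSpecRep}.
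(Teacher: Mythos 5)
Your proof is correct, but note that the paper itself contains no proof of \ref{LemSKPCA}: the lemma is introduced with the words ``based on the following known lemma'' and is stated without argument, as an instance of the representer theorem. Your proposal therefore cannot match an in-paper derivation; what it does is supply, correctly, exactly the standard orthogonality argument the authors are implicitly relying on. The decomposition $\ket{w} = \ket{w_\parallel} + \ket{w_\perp}$ with $\ket{w_\perp} \in V^\perp$, the observation that the $e_i$ (and hence the concave term) depend on $\ket{w}$ only through $\inp{w}{\fmapx{x_i}} = \inp{w_\parallel}{\fmapx{x_i}}$, the feasibility of $\ket{w_\parallel}$ because each $\ket{\eigvec{\ell}}$ lies in $V = \operatorname{range} \Phi$, and the strict decrease of the norm term by $\frac{1}{2}\inp{w_\perp}{w_\perp}$ are all sound, and you are right that the only delicate point is that the negative error term is left \emph{unchanged} (not merely bounded) by deleting $\ket{w_\perp}$. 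Two small refinements are worth recording. First, your step ``$\ket{\eigvec{\ell}}$ is proportional to $\Phi \eigveca{\ell}$'' tacitly uses the paper's simplifying nondegeneracy assumption; the degenerate case is handled uniformly by writing $\ket{\eigvec{}} = \frac{1}{\eigval{}} \Phi \prn{\Phi^* \ket{\eigvec{}}}$ for any eigenvector of $\covard$ with $\eigval{} \neq 0$, which shows directly that every such eigenvector lies in $\operatorname{range} \Phi$, and the same identity together with $\ker\prn{\covard} = \prn{\operatorname{range} \Phi}^\perp$ gives $V = \spans{\ket{\eigvec{1}}, \dotsc, \ket{\eigvec{r}}}$ and $\dim V = r$ without a separate rank computation. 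Second, the lemma presupposes that a minimizer exists, which by \ref{TheoSKPCA} requires $\gamma < 1/\eigval{k+1}$; your variational argument correctly applies to any minimizer conditional on existence, and in fact yields slightly more, since stationarity alone already forces $\ket{w_\perp} = 0$ (the gradient of the objective in the $V^\perp$ directions is $\ket{w_\perp}$ itself).
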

\begin{mythm}[Convexity and Solutions of \skpca{}]
\label{TheoSKPCA}
 \Ref{ProbSKPCA} is strongly convex iff  $\gamma < 1/\eigval{k+1}$, and
 \begin{enumerate}[(i)]
  \item If $\gamma < 1/\eigval{k+1}$, its unique solution is given by $\ket{\opt{w}} = \prn{I - \Pi_k} \sum_{j=1}^{N} \alpha_j \ket{\fmapx{x_j}}$ and $e_i = \alpha_i/\gamma$, for all $i = 1, \dots, N$, where $\alpha = \prn{\alpha_1, \dotsc, \alpha_n}^\tr$ is obtained by solving the linear system $\prn{I/\gamma - K + P_k } \alpha = y$.
  \item If $\gamma \geq 1/\eigval{k+1}$, it has no solution.
 \end{enumerate}
\end{mythm}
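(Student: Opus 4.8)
The plan is to follow the same strategy as the proof of \ref{TheoKPCA}: eliminate the slack variables $e_i$ with the equality constraints, reduce \ref{ProbSKPCA} to an optimization over $\ket{w}$ alone, and analyze the resulting functional through the spectral representation of \ref{LemSpecRep}. First I would substitute $e_i = y_i + \inp{w}{\fmapx{x_i}}$ into the objective and expand the square, which produces three contributions: the quadratic form $\frac12\inpt{w}{I_\Hsp - \gamma\covard}{w}$ already encountered in \ref{TheoKPCA}, a new linear term $-\gamma\sum_i y_i\inp{w}{\fmapx{x_i}}$, and the constant $-\frac{\gamma}{2}\sum_i y_i^2$. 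Decomposing $\ket{w} = \sum_\ell \ket{\eigvec{\ell}}\inp{\eigvec{\ell}}{w} + \proj_0\ket{w}$ and imposing $\inp{w}{\eigvec{\ell}} = 0$ for $\ell \le k$, exactly as in \ref{FiniteSumDecom}, the quadratic part collapses to $\frac12\inp{\proj_0 w}{\proj_0 w} + \frac12\sum_{\ell=k+1}^{r}(1-\gamma\eigval{\ell})(\inp{\eigvec{\ell}}{w})^2$, and since each $\ket{\fmapx{x_i}}$ lies in the range of $\covard$, the linear term couples only to the coordinates $\inp{\eigvec{\ell}}{w}$ with $\ell \ge k+1$ and annihilates $\proj_0\ket{w}$. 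By \ref{LemSKPCA} the optimizer has $\proj_0\ket{\opt{w}} = 0$, so the analysis is effectively finite dimensional.

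For convexity, the Hessian of the reduced functional is $I_\Hsp - \gamma\covard$ restricted to the feasible subspace, whose eigenvalues are $1$ on the null space and $1-\gamma\eigval{\ell}$ for $\ell = k+1,\dots,r$. These are uniformly bounded below by a positive constant iff $1-\gamma\eigval{k+1} > 0$, i.e. $\gamma < 1/\eigval{k+1}$ (using that $\eigval{k+1}$ is the largest relevant eigenvalue), which gives the strong-convexity claim. I would stress the strict inequality here, in contrast to \ref{TheoKPCA}: the newly introduced linear term is precisely what rules out the boundary value, as the last step will make explicit.

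For the solution formula when $\gamma < 1/\eigval{k+1}$, I would write the stationarity conditions of the Lagrangian of \ref{ProbSKPCA}, with multipliers $\alpha_i$ for $e_i = y_i + \inp{w}{\fmapx{x_i}}$ and $\beta_\ell$ for $\inp{w}{\eigvec{\ell}} = 0$ — legitimized by grounding the computation in the already reduced, strongly convex problem in $\ket{w}$ rather than in the $(w,e_i)$ form, whose objective is concave in $e_i$. Differentiating in $e_i$ yields $\alpha_i = \gamma e_i$, hence $e_i = \alpha_i/\gamma$; differentiating in $\ket{w}$ yields $\ket{w} = \sum_i \alpha_i\ket{\fmapx{x_i}} - \sum_{\ell\le k}\beta_\ell\ket{\eigvec{\ell}}$. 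Enforcing the eigenvector constraints gives $\beta_\ell = \sum_i\alpha_i\eigveca{\ell}^i$, which repackages the correction as $\Pi_k\sum_i\alpha_i\ket{\fmapx{x_i}}$ and produces $\ket{\opt{w}} = (I-\Pi_k)\sum_j\alpha_j\ket{\fmapx{x_j}}$. Substituting this back and using $P_k = \Phi^* \circ \Pi_k \circ \Phi$ together with $K_{ij} = \inp{\fmapx{x_i}}{\fmapx{x_j}}$ collapses $\inp{w}{\fmapx{x_i}}$ to $((K-P_k)\alpha)_i$, giving the linear system $(I/\gamma - K + P_k)\alpha = y$. I would close this part by diagonalizing $I/\gamma - K + P_k$ in the eigenbasis $\eigveca{\ell}$ of the kernel: its eigenvalues are $1/\gamma$ on the first $k$ directions and on $\ker K$, and $1/\gamma - \eigval{\ell} > 0$ for $\ell \ge k+1$, so the operator is positive definite, hence invertible with a unique $\alpha$, consistent with primal strong convexity.

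Finally, for $\gamma \ge 1/\eigval{k+1}$ I would show the infimum is not attained. When $\gamma > 1/\eigval{k+1}$ the coefficient $1-\gamma\eigval{k+1}$ is strictly negative, so evaluating the reduced objective along $\ket{w} = t\ket{\eigvec{k+1}}$ (which is feasible since $\eigvec{k+1}\perp\eigvec{\ell}$ for $\ell\le k$) drives it to $-\infty$ as $t\to\infty$. At the threshold $\gamma = 1/\eigval{k+1}$ that coefficient vanishes and the objective becomes affine in $\inp{\eigvec{k+1}}{w}$ with slope proportional to $\sum_i y_i\eigveca{k+1}^i$, so the labels' linear contribution tilts this flat direction and again makes the objective unbounded below. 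I expect this boundary case to be the main obstacle: unlike \ref{TheoKPCA}, where the flat direction left a bounded subspace of minimizers, here one must argue that the label-induced linear tilt genuinely sends the infimum to $-\infty$, so that no minimizer survives; the secondary care points are justifying the Lagrangian manipulation despite the concavity in $e_i$ and confirming that dual well-posedness coincides exactly with the primal strong-convexity range $\gamma < 1/\eigval{k+1}$.
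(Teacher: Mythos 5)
Your proposal matches the paper's proof essentially step for step: substitute the $e_i$ constraints, decompose $\Hsp^\perp$ into $\proj_0 \Hsp^\perp$ and $\spans{\ket{\eigvec{k+1}}, \dotsc, \ket{\eigvec{r}}}$, kill the null-space component via \ref{LemSKPCA}, identify strong convexity with $\gamma < 1/\eigval{k+1}$, and recover the linear system $\prn{I/\gamma - K + P_k}\alpha = y$ from the KKT conditions after eliminating the $\beta_\ell$ through $\Pi_k \ket{w} = 0$. You even supply two details the paper leaves implicit --- the unboundedness at the threshold $\gamma = 1/\eigval{k+1}$ via the label-induced linear tilt (which, note, requires the genericity condition $\sum_{i} y_i \eigveca{k+1}^i \neq 0$, in the same spirit as the paper's exclusion of degenerate eigenvalues) and the positive-definiteness, hence invertibility, of $I/\gamma - K + P_k$ --- so your argument is, if anything, slightly more complete than the original.
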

\begin{proof}
 Let us decompose $\Hsp = \Hsp^\perp + \spans{\ket{\eigvec{1}}, \dotsc, \ket{\eigvec{k}}}$ in order to eliminate the orthogonality constraints, so we restrict ourselves to $\ket{w}\in \Hsp^\perp$. The problem can be factorized by acting with $\proj_0$, i.e. the projector on $\ker{\covard}$. Indeed, we can do an orthogonal decomposition $\Hsp^\perp = \proj_0 \Hsp^\perp + \Hsp_{r-k}$ with $\Hsp_{r-k} =\spans{\ket{\eigvec{k+1}}, \dotsc, \ket{\eigvec{r}}}$, so that we can solve separately two problems defined on orthogonal spaces.
 Substituting the constraints for the $e_i$ values, the objective function can be split into two terms $T_1 + T_2$. Removing constant and zero terms, the functional $T_1: \proj_0 \Hsp^\perp \to \R$ is given by $T_1\prn{\proj_0 \ket{w}} = \frac{1}{2} \inp{\proj_0 w}{\proj_0 w} - \gamma \inp{\proj_0 w}{\proj_0 \sum_{i=1}^{N} y_i\fmapx{x_i}}$, and the function $T_2: \Hsp_{r-k} \to \R$ by $T_2\prn{\ket{w} - \proj_0 \ket{w}} = \frac{1}{2} \sum_{\ell={k+1}}^{r} \prn{1 - \gamma \eigval{\ell}} \prn{\inp{\eigvec{\ell}}{w}}^2 - \frac{\gamma}{2} \sum_{i=1}^{N} y_i^2 - \gamma \sum_{i=1}^{N} \sum_{\ell={k+1}}^{r} y_i \inp{w}{\eigvec{\ell}} \inp{\eigvec{\ell}}{\fmapx{x_i}}$.
 However, we know that $\proj_0 \sum_{i=1}^{N} y_i \ket{\fmapx{x_i}} = 0$ because of \ref{LemSKPCA}, so that the solution of the optimization problem satisfies ${P}_0 \ket{w} = 0$.
 As a result, the infinite dimensional optimization problem is reduced to the problem $\minp{\ket{w} \in \Hsp_{r-k}}{T_2\prn{\ket{w}}}$ with a quadratic objective function, which is convex iff $\gamma < 1/\eigval{k+1}$.
 In the latter case, by solving the necessary and sufficient optimality conditions $\prn{1-\gamma \eigval{\ell}} \inp{\eigvec{\ell}}{w} - \gamma \sum_{i=1}^{N} y_i \inp{\eigvec{\ell}}{\fmapx{x_i}} = 0$, for $\ell = k+1, \dotsc, r$, we obtain the primal solution $\ket{\opt{w}} = \sum_{\ell=k+1}^{r} \frac{\ket{\eigvec{\ell}} \bra{\eigvec{\ell}}}{1/\gamma - \eigval{\ell}} \sum_{i=1}^{N} y_i \ket{\fmapx{x_i}}$.
 The latter formula is not applicable in practice since it requires the computations of the eigenvectors of $\covard$.
 Hence, it is useful to study the Lagrange dual. The Lagrange function of the convex optimization problem \ref{EqSKPCA} is $\lagr = \frac{1}{2} \inp{w}{w} - \frac{\gamma}{2} \sum_{i=1}^{N} e_i^2 + \sum_{\ell=1}^{k} \beta_{\ell} \inp{\eigvec{\ell}}{w} \\ + \sum_{i=1}^{N} \alpha_{i} \prn{e_i - y_i - \inp{w}{\fmapx{x_i}}}$.
 The KKT conditions are:
 \begin{equation*}
 \left \{
 \begin{array}{rcl}
  \ket{w} &=& \gamma \sum_{i=1}^{N} \alpha_{i} \ket{\fmapx{x_i}} - \sum_{\ell=1}^{k} \beta_{k} \ket{\eigvec{\ell}} \eeq{,} \\
  e_i &=& \alpha_{i}/\gamma \sepfor i=1, \dotsc, N \eeq{,} \\
  \Pi_k \ket{w} &=& 0 \eeq{,} \\
  y_i &=& e_i - \inp{w}{\fmapx{x_i}} \sepfor i=1, \dotsc, N \eeq{,}
 \end{array}
 \right .
 \end{equation*}
 where $\Pi_k$ is defined in \ref{Projectors}. The multipliers $\beta_\ell$ are obtained by solving the orthogonality constraint $\Pi_k \ket{w} = 0$. By eliminating $\ket{w}$ in the last of the above equations, and using the definition of the projector $P_k$ of \ref{Projectors}, we obtain \ref{TheoSKPCA}.
\end{proof}

Hence, the convexity of the semi-supervised learning problem will depend on the number of constraints $k$ and the selection of $\gamma$, and its solution can be obtained by solving a linear system.
The whole procedure is summarized in \ref{AlgSKPCA}, and the values of $\gamma$ for which the problem is convex are represented in \ref{FigSchemes}.
The intuition of these results is similar to that of \kpca{}, and it is illustrated in \ref{FigSurfacesSA,FigSurfacesSB,FigSurfacesSC,FigSurfacesSD,FigSurfacesSE}. Basically, if the first term in the objective function of \ref{EqSKPCA} dominates the second term, we obtain a solution (in this case it is not trivial precisely because of the information added by the labels $y$, see \ref{FigSurfacesSA}). If the second term dominates the first one, the problem is not bounded (\ref{FigSurfacesSC}). When both dominant parts are equal, then the problem is not bounded but the solution is in the direction of the largest (active) eigenvector (\ref{FigSurfacesSB}).

\begin{malgorithm}{\label{AlgSKPCA} Algorithm of \skpca{}.}
 \begin{algorithmic}[1]
  \REQUIRE {\quad} \\
   \algreq Data $\set{\prn{x_i,y_i}}_{i=1}^{N}$ where $y_i \in \set{-1,0,1}$ \algstop \\
   \algreq Number of constraints $0 \le k < N$ \algstop \\
   \algreq Regularization parameter $\gamma$ \algstop
  \ENSURE {\quad} \\
   \algreq Predicted labels $\hat{y}$ \algstop
  \STATE Build kernel matrix $K \in \R^{N \times N}$ \algstop
  \STATE Compute $k$ largest eigenvalues $\eigval{1} \ge \cdots \ge \eigval{k}$ with eigenvectors $\eigvecv{1}, \dotsc, \eigvecv{k}$ \algstop
  \IF{$\gamma \ge 1/\eigval{k + 1}$}
   \RETURN Error: Unbounded problem \algstop
  \ENDIF
  \IF{$k > 0$}
   \STATE $K' \gets K - \sum_{i = 1}^{k} \eigval{i} \eigvecv{i} \eigvecv{i}^\tr$, the projected kernel matrix \algstop
  \ELSE
   \STATE $K' \gets K$ \algstop
  \ENDIF
  \STATE Solve the system $\prn{I / \gamma - K'}\alpha = y$ for the vector $\alpha \in \R^N$ \algstop
  \RETURN $\hat{y} = \sign\brq{K' \alpha}$ \algstop
 \end{algorithmic}
\end{malgorithm}

In practice, a good choice for \ref{ProbSSVM} is $k = 1$ since it is well-known in spectral clustering that the largest eigenvalue does not incorporate information for connected graphs~\cite{Luxburg}.

\section{Numerical Examples}
\label{SecExamples}

We will show experimentally how the proposed \skpca{} can deal with semi-supervised classification problems, particularly well when the number of labeled data is limited.
The proposed model is tested over the nine binary classification datasets of \ref{TabDatasets}: six from the KEEL-dataset repository~\cite{alcala2010keel}, two from the UCI one~\cite{Lichman:2013}, and a synthetic example conformed by four Gaussian clusters, two of each class, with separation $2 \sigma$ between classes and $2.5 \sigma$ between clusters of the same class.
We compare four models:
\begin{enumerate*}[(i)]
 \item \skpca{} with $k = 0$ (\skpcak{0}), i.e., the one corresponding to \ref{ProbSKPCA} without orthogonality constraints, which is convex for $0 \le \gamma < 1/\eigval{1}$;
 \item \skpca{} with $k = 1$ (\skpcak{1}), i.e. with one constraint, and hence convex for $0 \le \gamma < 1/\eigval{2}$;
 \item a simple semi-supervised variant of LS-SVM (\ssvm{}), where the target for the unlabeled patterns is $0$ (this approach shares similarities with the works of~\cite{NIPS2003_2506,SemiEigenMaps}); and
 \item LS-SVM trained only over the labeled subsample of the data (\svm{}).
\end{enumerate*}
For the last two models, we will omit the bias term $b$ since it does not improve the results with the small number of supervised patterns considered here.
The number of labeled data is varied as $1\%$, $2\%$, $5\%$, and $10\%$ the total number of patterns. Each experiment is repeated ten times to average the results. As a measure of the performance, we use the accuracy over the unlabeled data.
Regarding the parameter $\gamma$, its selection can be crucial for the performance, moreover, it is intrinsically difficult in semi-supervised learning since the amount of labeled data is very limited. Since this problem affects both the new \skpca{} and \ssvm{}/\svm{}, we consider two different set-ups. The first one is to select the best $\gamma$ parameter for each model looking at the test error, simulating the existence of a perfect validation criteria. The second one, more realistic, is to select an intermediate $\gamma$ value, which for the case of \skpca{} is at the middle in logarithmic scale of the interval in which the model is convex. For \ssvm{} and \svm{}, $\gamma$ is fixed as $\gamma = 10 d /N$ and $\gamma = 100 d /N$ respectively, i.e. a value of $10$ and $100$ but normalized.
With respect to the kernel, we use a Gaussian kernel with the bandwidth fixed to the median of the Euclidean distances between the points.

\begin{mtables}{\label{TabDatasets}Description of the Datasets}
 \begin{tabular}{*5c}
  \toprule
   \tabformathead{Dataset} & \tabformathead{Source} & \tabformathead{\#Patterns ($N$)} & \tabformathead{\#Features ($d$)} & \tabformathead{Majority Class (\%)} \\
  \midrule
\texttt{australian} & KEEL-dataset & $690$ & $14$ & $55.5$\% \\
\texttt{breastcancer} & KEEL-dataset & $683$ & $10$ & $65.0$\% \\
\texttt{diabetes} & UCI & $768$ & $8$ & $65.1$\% \\
\texttt{heart} & KEEL-dataset & $270$ & $13$ & $55.6$\% \\
\texttt{iris} & UCI & $150$ & $4$ & $66.7$\% \\
\texttt{monk-2} & KEEL-dataset & $432$ & $6$ & $52.8$\% \\
\texttt{pima} & KEEL-dataset & $768$ & $8$ & $65.1$\% \\
\texttt{sonar} & KEEL-dataset & $208$ & $60$ & $53.4$\% \\
\texttt{synth} & Synthetic & $400$ & $2$ & $50.0$\% \\
  \bottomrule
 \end{tabular}
\end{mtables}

The results are shown in \ref{TabResults}, where \skpcak{0} is omitted for being systematically worse than \skpcak{1}, confirming our previous guess that $k = 1$ should be more informative than $k = 0$. This table includes the number of labeled patterns, and the mean and standard deviations of the accuracies of \skpcak{1}, \ssvm{} and \svm{}, both for the best and heuristic $\gamma$. The colours show visually the ranking at each block.
We can see how the proposed \skpcak{1} improves the results of \ssvmsvm{} in almost all the datasets when the ratio of labeled data is small. This was to be expected, as when the amount of labels increases, both \ssvmsvm{} tend to the standard LS-SVM, whereas our proposed model is best suited for semi-supervised learning with very few training labels. About the datasets of \texttt{pima} and \texttt{diabetes}, these two are classification datasets that probably do not satisfy the classical requirements for semi-supervised learning of presenting a low-density region between the classes. This can explain the low performance of the compared models. In the case of the \texttt{sonar} dataset, the number of labels is only two for the first set-up, probably too low for the complexity of the problem. Comparing \ssvm{} and \svm{}, we can see that the naive semi-supervised approach is better in more than half of the datasets for the smallest number of labels, supporting that this simple approach takes advantage of the unlabeled data.
Regarding the two criteria for selecting $\gamma$, the conclusions are the same for both approaches, \skpcak{1} being better than \ssvmsvm{} for small amounts of labels.
Moreover, in average the difference between the results using the ground-truth $\gamma$ and the heuristic $\gamma$ is smaller for \skpcak{1} than for \ssvmsvm{}.
As an example of how the accuracy changes across the different $\gamma$ values, \ref{FigResults} shows the results over two datasets for the four models, using only $1\%$ of the labels.

\begin{mtables}{\label{TabResults}Experimental Results}
\setlength{\tabcolsep}{0pt}
 \begin{tabular}{c@{\enspace}c@{\enspace}*3c@{\enspace}*3c}
  \toprule
   \multirow{2}{*}{\rotatebox[origin=c]{90}{\tabformathead{Data}}} & \multirow{2}{*}{\tabformathead{Labs.}} & \multicolumn{3}{c}{\tabformathead{Accuracy (\%) - Best $\gamma$}} & \multicolumn{3}{c}{\tabformathead{Accuracy (\%) - Fixed $\gamma$}}\\ \cmidrule(lr){3-5} \cmidrule(lr){6-8}
   & & \tabformathead{\skpcak{1}} & \tabformathead{\ssvm{}} & \tabformathead{\svm{}}  & \tabformathead{\skpcak{1}} & \tabformathead{\ssvm{}} & \tabformathead{\svm{}} \\
  \midrule
\multirow{4}{*}{\rotatebox[origin=c]{90}{\texttt{austral.}}}
 & 7 & \formata{$83.0\pm0.7$} & \formatb{$72.2\pm6.0$} & \formatc{$71.3\pm10.4$} & \formata{$80.8\pm2.9$} & \formatb{$71.5\pm5.7$} & \formatc{$65.8\pm11.7$}\\
 & 14 & \formata{$83.0\pm0.7$} & \formatc{$76.1\pm4.2$} & \formatb{$77.3\pm6.0$} & \formata{$81.7\pm2.1$} & \formatc{$75.1\pm4.4$} & \formatb{$76.6\pm6.5$}\\
 & 35 & \formata{$84.1\pm0.5$} & \formatc{$79.3\pm3.1$} & \formatb{$81.4\pm4.1$} & \formata{$83.9\pm1.1$} & \formatc{$79.3\pm3.1$} & \formatb{$81.4\pm4.8$}\\
 & 69 & \formatb{$84.0\pm1.5$} & \formatc{$83.7\pm1.6$} & \formata{$85.4\pm1.0$} & \formatb{$83.7\pm1.6$} & \formatc{$83.2\pm1.6$} & \formata{$85.4\pm1.0$}\\
\midrule
\multirow{4}{*}{\rotatebox[origin=c]{90}{\texttt{breastc.}}}
 & 7 & \formata{$95.5\pm0.2$} & \formatc{$90.1\pm4.6$} & \formatb{$93.6\pm2.7$} & \formata{$95.2\pm0.7$} & \formatb{$86.6\pm3.6$} & \formatc{$85.4\pm10.3$}\\
 & 14 & \formata{$95.4\pm0.1$} & \formatc{$94.1\pm5.1$} & \formatb{$94.9\pm3.0$} & \formata{$94.8\pm0.7$} & \formatc{$91.1\pm4.5$} & \formatb{$94.1\pm6.0$}\\
 & 34 & \formatc{$95.5\pm0.2$} & \formatb{$95.7\pm1.8$} & \formata{$96.0\pm0.7$} & \formatb{$95.1\pm0.5$} & \formatc{$93.5\pm2.1$} & \formata{$95.9\pm0.6$}\\
 & 68 & \formatc{$95.5\pm0.2$} & \formatb{$96.0\pm0.7$} & \formata{$96.5\pm0.5$} & \formatc{$95.2\pm0.4$} & \formatb{$95.6\pm0.9$} & \formata{$96.3\pm0.5$}\\
\midrule
\multirow{4}{*}{\rotatebox[origin=c]{90}{\texttt{diabetes}}}
 & 8 & \formata{$68.9\pm1.1$} & \formatc{$66.6\pm7.9$} & \formatb{$67.3\pm6.6$} & \formatc{$63.7\pm5.6$} & \formata{$65.7\pm7.1$} & \formatb{$63.8\pm10.5$}\\
 & 15 & \formatc{$69.4\pm0.2$} & \formatb{$69.5\pm3.6$} & \formata{$70.3\pm4.0$} & \formatc{$68.2\pm2.8$} & \formatb{$69.2\pm3.4$} & \formata{$69.6\pm3.4$}\\
 & 39 & \formatc{$69.5\pm0.4$} & \formatb{$72.1\pm2.5$} & \formata{$72.9\pm2.6$} & \formatc{$69.4\pm1.6$} & \formatb{$71.9\pm2.2$} & \formata{$72.7\pm3.0$}\\
 & 77 & \formatc{$69.6\pm1.6$} & \formatb{$73.9\pm2.2$} & \formata{$74.4\pm1.9$} & \formatc{$69.3\pm1.8$} & \formatb{$73.9\pm2.2$} & \formata{$74.4\pm1.9$}\\
\midrule
\multirow{4}{*}{\rotatebox[origin=c]{90}{\texttt{heart}}}
 & 3 & \formata{$73.8\pm16.1$} & \formatb{$60.0\pm7.1$} & \formatc{$56.0\pm9.3$} & \formata{$69.3\pm12.0$} & \formatb{$58.2\pm5.9$} & \formatc{$52.3\pm5.7$}\\
 & 6 & \formata{$74.5\pm17.9$} & \formatb{$65.5\pm9.4$} & \formatc{$65.1\pm14.2$} & \formata{$71.5\pm13.6$} & \formatc{$63.6\pm8.3$} & \formatb{$63.6\pm12.2$}\\
 & 14 & \formata{$80.6\pm1.6$} & \formatc{$68.4\pm6.2$} & \formatb{$70.7\pm7.8$} & \formata{$79.3\pm3.4$} & \formatc{$68.0\pm5.2$} & \formatb{$70.3\pm9.0$}\\
 & 27 & \formata{$80.7\pm1.8$} & \formatc{$80.0\pm2.1$} & \formatb{$80.7\pm1.7$} & \formata{$80.4\pm1.8$} & \formatc{$76.9\pm2.8$} & \formatb{$80.0\pm2.4$}\\
\midrule
\multirow{4}{*}{\rotatebox[origin=c]{90}{\texttt{iris}}}
 & 2 & \formata{$91.4\pm14.9$} & \formatb{$73.3\pm17.0$} & \formatc{$72.6\pm25.9$} & \formata{$91.1\pm13.3$} & \formatb{$73.1\pm13.5$} & \formatc{$72.6\pm25.9$}\\
 & 3 & \formata{$93.6\pm2.6$} & \formatc{$87.1\pm11.4$} & \formatb{$89.5\pm16.3$} & \formata{$92.8\pm5.3$} & \formatc{$86.4\pm10.7$} & \formatb{$89.5\pm16.3$}\\
 & 8 & \formata{$94.8\pm3.0$} & \formatc{$90.3\pm11.7$} & \formatb{$93.0\pm14.7$} & \formata{$94.8\pm3.0$} & \formatc{$90.3\pm11.7$} & \formatb{$92.7\pm14.6$}\\
 & 15 & \formatc{$95.8\pm3.7$} & \formatb{$98.1\pm2.8$} & \formata{$99.9\pm0.3$} & \formatc{$95.3\pm2.4$} & \formatb{$98.1\pm2.8$} & \formata{$99.9\pm0.3$}\\
\midrule
\multirow{4}{*}{\rotatebox[origin=c]{90}{\texttt{monk-2}}}
 & 4 & \formata{$69.2\pm4.9$} & \formatb{$68.0\pm5.4$} & \formatc{$66.6\pm8.1$} & \formata{$68.8\pm5.0$} & \formatb{$68.0\pm5.5$} & \formatc{$59.4\pm10.9$}\\
 & 9 & \formatb{$70.9\pm5.7$} & \formatc{$70.7\pm6.8$} & \formata{$73.9\pm9.1$} & \formata{$70.7\pm5.8$} & \formatb{$69.9\pm8.5$} & \formatc{$67.6\pm12.4$}\\
 & 22 & \formatc{$75.3\pm3.6$} & \formatb{$78.2\pm3.5$} & \formata{$79.8\pm3.0$} & \formatc{$75.0\pm3.8$} & \formatb{$76.3\pm4.5$} & \formata{$76.9\pm3.3$}\\
 & 43 & \formatc{$79.3\pm2.3$} & \formatb{$84.9\pm3.2$} & \formata{$92.7\pm2.2$} & \formatc{$79.0\pm2.3$} & \formatb{$82.2\pm3.2$} & \formata{$84.2\pm1.8$}\\
\midrule
\multirow{4}{*}{\rotatebox[origin=c]{90}{\texttt{pima}}}
 & 8 & \formata{$65.4\pm12.3$} & \formatc{$65.0\pm5.6$} & \formatb{$65.3\pm3.9$} & \formatb{$62.8\pm8.9$} & \formata{$63.6\pm5.4$} & \formatc{$62.0\pm8.1$}\\
 & 15 & \formata{$69.1\pm0.9$} & \formatb{$68.1\pm2.5$} & \formatc{$67.9\pm2.5$} & \formatc{$66.1\pm7.2$} & \formatb{$67.5\pm3.5$} & \formata{$67.7\pm2.7$}\\
 & 39 & \formatc{$69.3\pm0.4$} & \formatb{$71.6\pm3.1$} & \formata{$72.2\pm2.7$} & \formatc{$68.4\pm1.8$} & \formatb{$71.3\pm2.9$} & \formata{$71.7\pm3.0$}\\
 & 77 & \formatc{$69.9\pm0.8$} & \formatb{$73.9\pm2.2$} & \formata{$74.1\pm2.4$} & \formatc{$69.6\pm0.9$} & \formatb{$73.9\pm2.2$} & \formata{$74.1\pm2.1$}\\
\midrule
\multirow{4}{*}{\rotatebox[origin=c]{90}{\texttt{sonar}}}
 & 2 & \formatb{$52.7\pm7.8$} & \formata{$55.7\pm2.5$} & \formatc{$50.0\pm7.2$} & \formatb{$52.1\pm7.7$} & \formata{$55.1\pm3.4$} & \formatc{$50.0\pm7.2$}\\
 & 4 & \formata{$58.3\pm7.4$} & \formatb{$56.7\pm5.1$} & \formatc{$53.9\pm6.3$} & \formata{$57.1\pm6.8$} & \formatb{$54.7\pm3.8$} & \formatc{$53.0\pm5.9$}\\
 & 11 & \formatc{$65.0\pm4.5$} & \formatb{$65.2\pm5.6$} & \formata{$65.4\pm7.5$} & \formatb{$64.0\pm3.6$} & \formatc{$61.8\pm5.4$} & \formata{$65.1\pm7.6$}\\
 & 21 & \formatc{$70.2\pm5.5$} & \formatb{$70.3\pm5.8$} & \formata{$71.6\pm5.0$} & \formatb{$69.0\pm4.9$} & \formatc{$66.6\pm4.7$} & \formata{$71.5\pm4.9$}\\
\midrule
\multirow{4}{*}{\rotatebox[origin=c]{90}{\texttt{synth}}}
 & 4 & \formata{$93.1\pm5.6$} & \formatc{$86.7\pm7.6$} & \formatb{$87.6\pm9.7$} & \formata{$92.7\pm6.1$} & \formatb{$86.2\pm8.9$} & \formatc{$63.2\pm21.6$}\\
 & 8 & \formatb{$96.1\pm2.4$} & \formatc{$92.9\pm5.0$} & \formata{$96.2\pm2.1$} & \formata{$96.0\pm2.6$} & \formatb{$91.5\pm9.4$} & \formatc{$85.8\pm14.5$}\\
 & 20 & \formata{$96.9\pm0.9$} & \formatc{$95.1\pm1.8$} & \formatb{$96.6\pm1.4$} & \formata{$96.8\pm0.9$} & \formatc{$94.2\pm3.1$} & \formatb{$94.7\pm3.2$}\\
 & 40 & \formata{$97.6\pm0.5$} & \formatc{$96.5\pm1.5$} & \formatb{$97.3\pm0.7$} & \formata{$97.4\pm0.5$} & \formatc{$96.4\pm1.8$} & \formatb{$96.9\pm0.9$}\\
  \bottomrule
 \end{tabular}
\end{mtables}

\begin{mfiguretikzs}{\label{FigResults}Mean accuracy with only $1\%$ of labels for \texttt{australian} (top) and \texttt{synth} (bottom). The black dashed lines indicate the limits of the convexity intervals ($1/\eigval{1}$ and $1/\eigval{2}$), and the dotted one the accuracy of the baseline error. The red asterisks \showaster{red} mark the best $\gamma$ for each model, and the blue asterisks \showaster{blue} the intermediate heuristic $\gamma$. Legend: \showline{NavyBlue}~\skpcak{0}; \showline{BrickRed}~\skpcak{1}; \showline{SeaGreen}~\ssvm{}; \showline{Dandelion}~\svm{}.}
 \tikzwidth{\textwidth}
 \includetikz{DemoSSDatasetE-australian-01}\\%
 \includetikz{DemoSSDatasetE-synth-01}
\end{mfiguretikzs}

\section{Conclusions}
\label{SecConcl}

Convex optimization is often desirable in machine learning. Supervised problems, as for instance SVMs for regression or classification, heavily rely on convex optimization problems given by a sum or convex combination of a convex regularization term and a convex loss function. The trade-off between smoothness of the model and fitting the data is usually given by a positive regularization constant.
Our first result of theoretical interest is the definition of a primal convex optimization problem for KPCA, including the possibility of an infinite dimensional feature space. This latter theoretical feature is desirable if, for instance, the Gaussian kernel is used for Kernel PCA. Besides, strong duality allows to consider a dual problem with the same solution, in analogy with SVMs.
Motivated by the introduction of a convex formulation of Kernel PCA, we have defined a new semi-supervised classification problem which can be interpreted as the minimization of the sum of a convex regularization term and a concave loss function. Although the loss function is concave, the convexity of the objective function is insured by the appropriate choice of constraints and trade-off parameter.
Our approach was illustrated by a series of numerical experiments on artificial and real data. The method, called \skpca{}, defined in this paper, was compared to a more conventional semi-supervised Least Squares SVM (LS-SVM) and to LS-SVM trained only over labeled data, leading to a better classification accuracy for a small number of labels.

We have proposed here a new type of convex optimization problems for machine learning. The upshot is that various further studies are now possible by choosing different concave loss functions in supervised problems for classification and regression, provided that the trade-off parameter is appropriately chosen to make sure that the objective function is convex, or some additional constraints are imposed.

\section*{Acknowledgments}

\begin{footnotesize}
 The authors would like to thank the following organizations.
 \begin{itemize*}
  \item EU: The research leading to these results has received funding from the European Research Council under the European Union's Seventh Framework Programme (FP7/2007-2013) / ERC AdG A-DATADRIVE-B (290923). This paper reflects only the authors' views, the Union is not liable for any use that may be made of the contained information.
  \item Research Council KUL: GOA/10/09 MaNet, CoE PFV/10/002 (OPTEC), BIL12/11T; PhD/Postdoc grants.
  \item Flemish Government:
  \begin{itemize*}
   \item FWO: G.0377.12 (Structured systems), G.088114N (Tensor based data similarity); PhD/Postdoc grants.
   \item IWT: SBO POM (100031); PhD/Postdoc grants.
  \end{itemize*}
  \item iMinds Medical Information Technologies SBO 2014.
  \item Belgian Federal Science Policy Office: IUAP P7/19 (DYSCO, Dynamical systems, control and optimization, 2012-2017).
 \end{itemize*}
\end{footnotesize}

\bibliographystyle{unsrt}
\bibliography{References}

\end{document}